\theoremstyle{definition}
\newtheorem{proposition}{\normalfont\bfseries Proposition}
\newtheorem{definition}{\normalfont\bfseries Definition}
\newtheorem{remark}{\normalfont\bfseries Remark}
\newenvironment{primitive}[1]{%
  \manualtheoreminner
}{\endmanualtheoreminner}
\newcommand{\motionprimitive}{\mathcal{P}}
\newcommand{\textbfit}[1]{\textbf{\textit{{#1}}}}
\par\addvspace{2mm}\hrule height 0.03cm 
\BODY\end{table}
\title{\LARGE \bf
 Robust Locomotion on Legged Robots through Planning on
 \\ Motion Primitive Graphs
}
\author{Wyatt Ubellacker  and Aaron D. Ames%
        \thanks{This research is supported by Dow (\#227027AT). \vspace{1mm}}%
\thanks{Authors are with the Departments of Control and Dynamical Systems, Mechanical and Civil Engineering, California Institute of Technology, Pasadena, CA, USA.
{\tt\small wubellac, ames@caltech.edu }}
}
\begin{document}
\nocite{video} 

\maketitle
\thispagestyle{empty}
\pagestyle{empty}

\begin{abstract}
The functional demands of robotic systems often require completing various
tasks or behaviors under the effect of disturbances or uncertain environments. Of increasing
interest is the autonomy for dynamic robots, such as multirotors, motor
vehicles, and legged platforms. Here, disturbances and environmental conditions
can have significant impact on the successful performance of the individual dynamic
behaviors, referred to as ``motion primitives''.
Despite this, robustness can be achieved by switching to and
transitioning through suitable motion primitives. This paper contributes such a
method by presenting an abstraction of the motion primitive dynamics and a
corresponding ``motion primitive transfer function''. From this, 
a mixed discrete and continuous ``motion primitive graph'' is constructed, and an algorithm
capable of online search of this graph is detailed.
The result is a framework capable of realizing holistic
robustness on dynamic systems. This is experimentally demonstrated for a
set of motion primitives on a quadrupedal robot, subject to various
environmental and intentional disturbances.

\end{abstract}
\vspace{-1mm}

\section{Introduction}
\label{sec:introduction}
There is a wealth of research and applications of functional autonomy and
demonstrations on robotic systems that range from highly structured
manufacturing applications \cite{PEDERSEN2016282} to exploring the alien
environments on other planets \cite{autonomy_mars_2020, autonomy_europa_2021}.
This autonomy is often realized by sequences
\cite{TeamRobosimianKaruma2017,edelbergCASAH}, state-machines
\cite{Backes_IRSA_2018}, or graph-search \cite{kim2001executing} autonomy to chain
behaviors together to perform complex objectives. In many applications,
including autonomous vehicles, human-robot interactions, and dynamic legged
robots, robustness to uncertainties and disturbances is critical to successful
function. 

\begin{figure}
  \centering
  \includegraphics[width=\columnwidth]{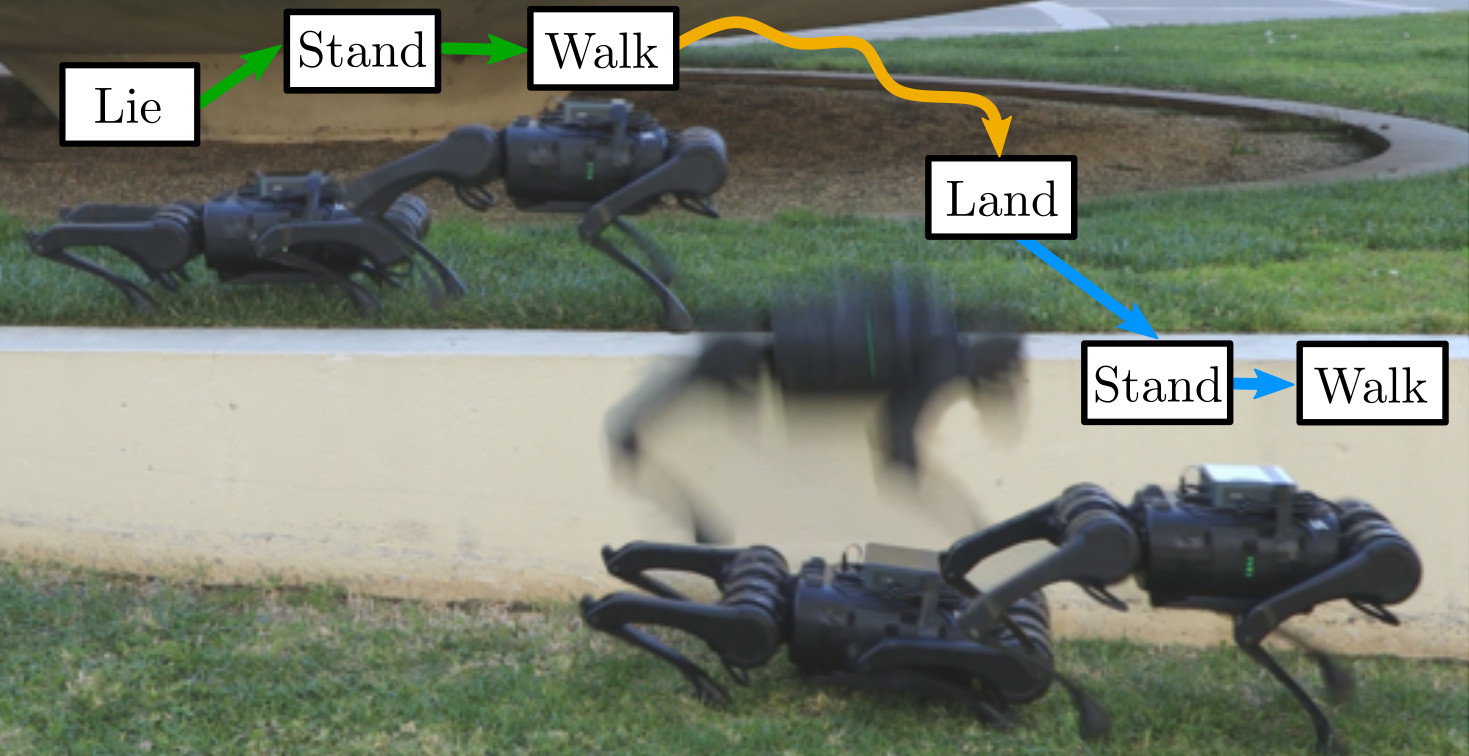} \\
  \vspace{2mm}
  \includegraphics[width=\columnwidth]{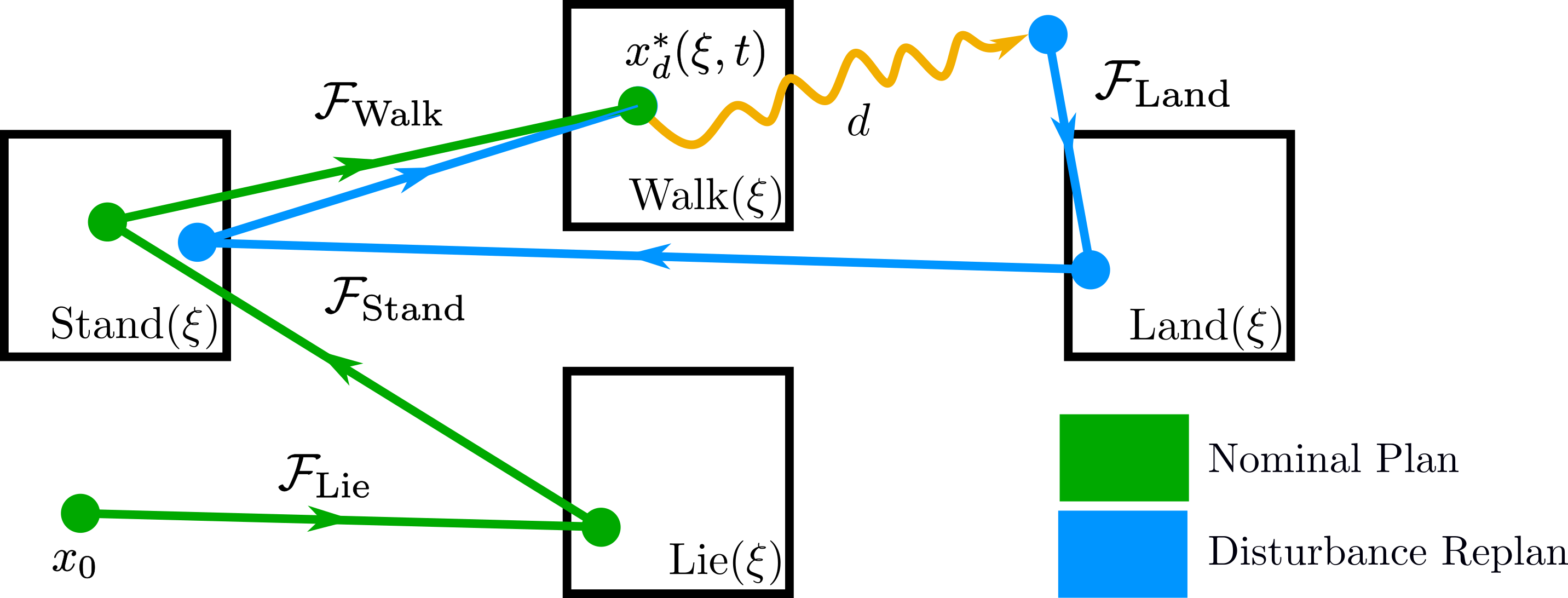}
  \caption{A quadrupedal robot demonstrating robustness to falling off a ledge by
continuously recomputing motion primitive transitions as disturbances interrupt nominal operation.}
  \vspace{-3mm}
  \label{fig:intro_fig}
\end{figure}
There are extensive studies of robust autonomy on dynamic systems
\cite{bdi_atlas,Park_fsm_walking,sales2014adaptive,singletary2020safety}.
However, success typically relies on transition-specific analysis or heuristic
conditions to determine switching behavior between dynamic primitive behaviors (commonly referred to as
\textit{motion primitives} \cite{HumanInspiredZhao2014,motion_prim_Paranjape,motion_prim_Kolathaya,
Ubellacker2021}). There is significant active work developing
individual motion primitives for various dynamic systems
\cite{kim2019highly,gomez2001parallel,falanga2017vision},
and capability is rapidly increasing in both volume and complexity.
If we are to effectively incorporate increasingly complex behaviors into a
dynamic autonomous system, we require a more formal method to determine
appropriate transitions, both in nominal operation and in response to
disturbances.

With this motivation, we build upon previous work on motion primitive
transitions \cite{Ubellacker2021} where only discrete motion
primitives and offline search were considered, and is limited in usefulness in
regards to robustness. This manuscript extends the definition of motion
primitives to include continuous arguments, develops a method for online
search, and provides a methodology to manage the resulting complexity. We rely
on notions
of stability and regions of attraction to determine transition conditions
\cite{tedrake2009lqr,sousa2020command} and construct an
abstraction of the dynamics that captures the mapping of
dynamic state across the application of a motion primitive. This leads
to a natural mixed discrete and continuous \textit{motion primitive graph} that
scales in complexity with number of primitives and associated arguments rather
than the system dynamics.  Inspired by the success of probabilistic search on
similar problems,
\cite{Lavalle98rapidly-exploringrandom,probabilistic_roadmap}, we propose a
motion primitive graph search algorithm capable of continuous planning towards
a desired motion primitive in both nominal and disturbed conditions. This
summary represents the main contribution of this paper -- a method to plan
through motion primitive transitions despite underlying dynamics and
complexity.

This procedure is applied to a quadrupedal
robot with a set of motion primitives. Several experiments across a variety of
environmental and antagonistic disturbances are successfully performed, and the results
and accompanying video highlight the contributions of this work.

\section{Preliminaries}
\label{sec:preliminaries}
For the duration of this manuscript, we consider a nonlinear system in control
affine form. We have system dynamics
\begin{align}
\label{eq:dynamics}
\dot{x} = f(x) +g(x)u
\end{align}
with state $x\in \mathcal{X} \subset \mathbb{R}^n$ and control inputs $u\in
\mathcal{U}\subset\mathbb{R}^n$. The functions ${f:\mathcal{X}\to \mathbb{R}^n}$
and ${g:\mathcal{X}\to \mathbb{R}^{n\times m}}$ are assumed to be locally
Lipschitz continuous. 
The \textit{flow} of this system, $\phi_t(x_0)$, is 
solution to the initial value problem with $x(0)=x_0$. The Lipschitz
assumptions give that $\phi_t(x_0)$ is unique and, assuming forward completeness,
exists for all $t \geq 0$.

\subsection{Motion Primitives}
While the idea of motion primitives is not new, we 
introduce our own definition specifically suited for our purposes.
This definition is generalization of the definition from
previous work \cite{Ubellacker2021} to a larger class of motion primitives. 

\begin{definition}{}
\label{def:primitive}
A \textbfit{motion primitive} is a dynamic behavior of~(\ref{eq:dynamics})
defined by the 6-tuple
$\motionprimitive={(\Xi,x^*,k,\Omega,\mathcal{C},\mathcal{S})}$ with the
following attributes:
\begin{itemize}
\item The valid \textit{arguments}, $\Xi \subset \mathbb{R}^a$. This is
the bounded set of continuous arguments that specifies the motion primitive's
behavior.
\item The \textit{setpoint}, ${x^*(x_0,\xi,t):\mathcal{X} \times \Xi \times
\mathbb{R} \to \mathcal{X}}$, that describes the desired state as a function of
initial state, arguments, and time.
It satisfies~(\ref{eq:dynamics}) and hence ${x^*(x_0,\xi,t+t_0)=\phi_t(x^*(x_0,\xi,t_0),\xi)}$, ${\forall t \geq 0, t_0 \in \mathbb{R}}$.
It may be executed with a selected initial time $t_0$. 
$x^*$ must be differentiable with respect to $\xi$ and $t$ over the safe region of attraction
($\mathcal{S}(x^*(\cdot),\xi)$, as defined below).
\item The \textit{control law}, ${k:\mathcal{X} \times \Xi  \times \mathbb{R}
\to \mathcal{U}}$, that determines the control input ${u=k(x,\xi,t)}$.
It is assumed to render the setpoint locally exponentially stable on the region
of attraction ($\Omega(x^*(\cdot),\xi)$, as below).
For constants $M, \alpha > 0 \in \mathbb{R},$ all $t>t_0$ and $x_0 \in 
\Omega(x^*(\cdot),\xi_0)$ implies:
\begin{align} \label{eq:exp_stab} 
||\phi_{t-t_0}(x_0) - x^*(\cdot)|| \leq Me^{-\alpha(t-t_0)}||x_0 - x^*(\cdot)||
\end{align}
\item The \textit{region of attraction (RoA)} of the setpoint, ${\Omega:
\mathcal{X} \times \Xi \to \mathscr{P}(\mathcal{X})}$, given by ${\Omega(x^*(\cdot),\xi) \subseteq \mathcal{X}}$:
\begin{align*}
    \Omega(x^*(\cdot),\xi) = \{&x_0\in\mathcal{X} :\\&\lim_{t\to \infty} \phi_t(x_0) \!-\!
x^*(x_0,\xi,t\!+\!t_0) = 0\}.
\end{align*}
\item The \textit{safe set}, ${\mathcal{C}: \mathcal{X} \times \Xi \to \mathscr{P}(\mathcal{X})}$, that
indicates the states of safe operation by the set ${\mathcal{C}(x^*(\cdot),\xi) \subset \mathcal{X}}$. It is assumed that
${x^*(x_0,\xi,t+t_0) \in \mathcal{C}(t)}$, ${\forall t \geq 0, t_0 \in \mathbb{R}}$.
This is designer-specified, and it is important to note that
${x \in \mathcal{C}(x^*(\cdot),\xi) \centernot\Longleftrightarrow x \in \Omega(x^*(\cdot),\xi)}$.
\item The \textit{safe region of attraction}, ${\mathcal{S}:
\mathcal{X} \times \Xi \to \mathscr{P}(\mathcal{X})}$, that defines
the set of states from which the flow converges to the setpoint while being safe for all time:
\begin{align*}
\mathcal{S}(x^*(\cdot),\xi)=\{&x_0 \in \Omega(x^*(\cdot),\xi):\\ &\phi_t(x_0,\xi,t) \in \mathcal{C}(x^*(\cdot),\xi), \forall t \geq 0\}.
\label{eq:safeRoA}
\end{align*}
\end{itemize}
\end{definition}

A illustration of the relationship between motion primitive attributes can be seen in Figure~\ref{fig:primitive_relationship} and elucidating examples can be found in 
Section~\ref{subsec:quadruped_primitives}.

\begin{figure}
  \centering
  \includegraphics[width=0.9\columnwidth]{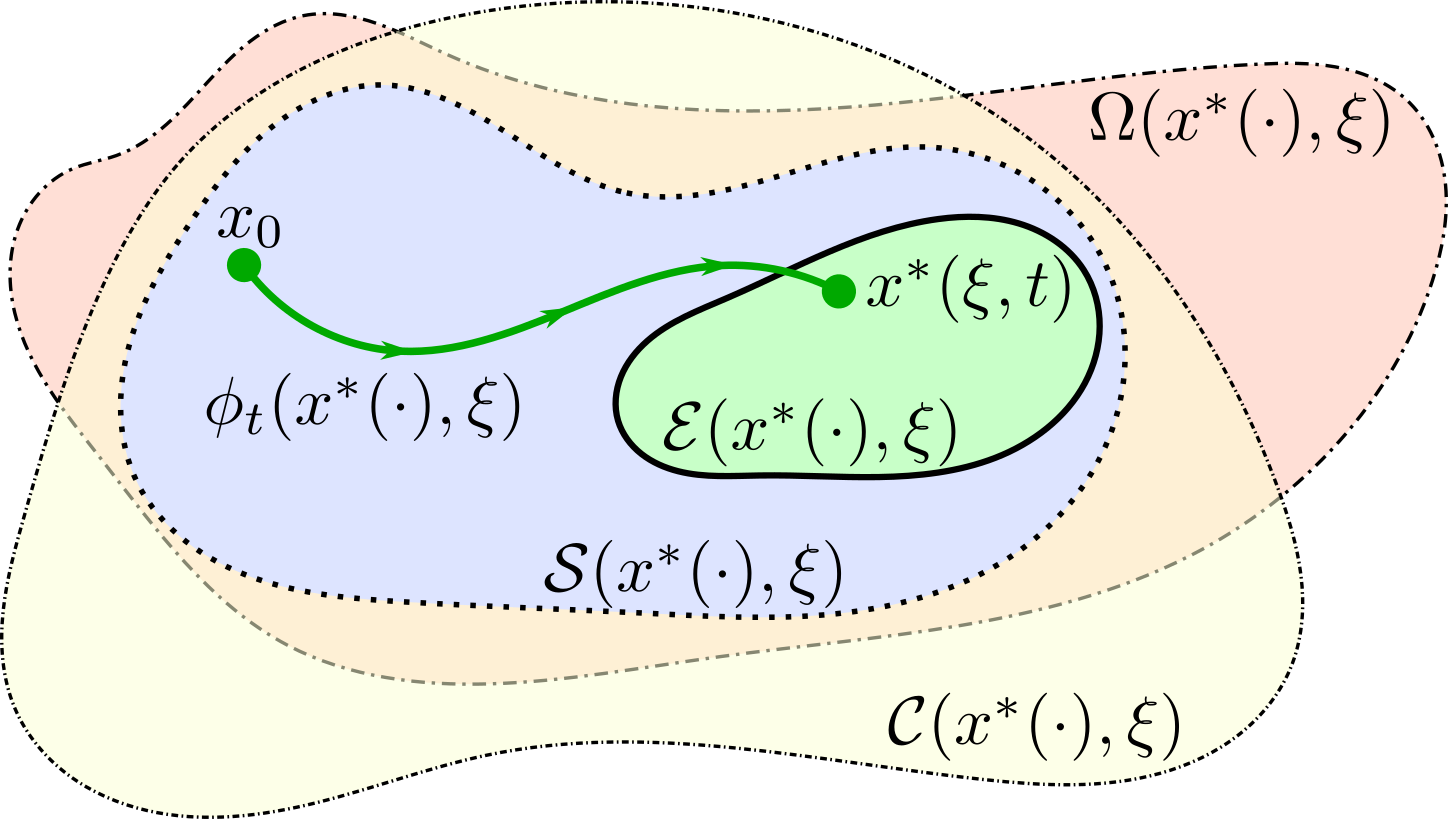}
  \caption{Depiction of motion primitive attributes and their relationships. Note the dependence on $\xi$ and $x^*(x_0,\xi,t)$.}
  \label{fig:primitive_relationship}
\end{figure}
\subsection{Motion Primitive Transfer Function}
\begin{proposition}
Consider a motion primitive with argument $\xi_0 \in \Xi$, 
initial condition $x_0 \in \mathcal{X}$ and setpoint
$x^*(x_0,\xi_0,t)$. If $x_0 \in \mathcal{S}(x^*,\xi_0)$. There exists a duration
$\Delta t_{min} \geq 0, \Delta t_{min}\in \mathbb{R}$ for any small constant $\epsilon > 0, \epsilon \in \mathbb{R}$ such that:
\begin{equation} \label{eq:epsilon_prop}
||\phi_{\Delta t_{min}}(x_0) - x^*(x_0,\xi_0,t_0+\Delta t_{min})|| < \epsilon
\end{equation}
\end{proposition}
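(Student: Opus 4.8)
The plan is to convert the exponential-stability guarantee attached to the control law into an explicit convergence time. First I would observe that the hypothesis is slightly stronger than needed: by the definition of the safe region of attraction, $\mathcal{S}(x^*,\xi_0) \subseteq \Omega(x^*,\xi_0)$, so $x_0 \in \mathcal{S}(x^*,\xi_0)$ in particular places $x_0$ in the ordinary region of attraction, where the bound~(\ref{eq:exp_stab}) is valid. (The safety half of the definition of $\mathcal{S}$ plays no role in this statement.) Evaluating~(\ref{eq:exp_stab}) at $t = t_0 + \Delta t_{min}$ gives
\begin{equation*}
||\phi_{\Delta t_{min}}(x_0) - x^*(x_0,\xi_0,t_0+\Delta t_{min})|| \leq M e^{-\alpha \Delta t_{min}}\, ||x_0 - x^*(x_0,\xi_0,t_0)||,
\end{equation*}
whose left-hand side is exactly the quantity appearing in~(\ref{eq:epsilon_prop}).

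It then remains only to choose $\Delta t_{min}$ large enough that the right-hand side drops below $\epsilon$. Writing $d_0 := ||x_0 - x^*(x_0,\xi_0,t_0)||$ for the initial tracking error, the requirement $M e^{-\alpha \Delta t_{min}} d_0 < \epsilon$ rearranges to $\Delta t_{min} > \tfrac{1}{\alpha}\ln(M d_0/\epsilon)$; since $M,\alpha>0$ the map $\Delta t \mapsto M e^{-\alpha \Delta t} d_0$ is continuous and strictly decreasing to $0$, so an admissible finite $\Delta t_{min}$ always exists. A concrete choice is
\begin{equation*}
\Delta t_{min} = \max\!\left(0,\ \tfrac{1}{\alpha}\ln\!\left(\tfrac{M d_0}{\epsilon}\right)\right) + \delta
\end{equation*}
for any fixed $\delta > 0$. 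The outer $\max$ together with $\delta>0$ enforces $\Delta t_{min} > 0$ (so that~(\ref{eq:exp_stab}), stated for $t>t_0$, genuinely applies), while the extra $\delta$ pushes the estimate strictly past the break-even time: in the unclamped branch the bound evaluates to $\epsilon\, e^{-\alpha\delta} < \epsilon$, and in the clamped branch $M d_0 \leq \epsilon$ forces $M e^{-\alpha\delta} d_0 < \epsilon$. Chaining this with the displayed bound yields~(\ref{eq:epsilon_prop}) by transitivity.

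I expect no deep obstacle here, since the claim is little more than a repackaging of exponential convergence; the only points demanding care are bookkeeping and one edge case. On bookkeeping, I would apply~(\ref{eq:exp_stab}) with consistent time slots — the error on the right measured at $t_0$ and the error on the left at $t_0 + \Delta t_{min}$ — so that the abbreviated $x^*(\cdot)$ notation is resolved correctly. For the edge case, the degenerate situation $d_0 = 0$ (where $x_0$ already lies on the setpoint trajectory) makes the logarithm ill-defined, but there the bound forces the tracking error to remain zero for all time, so any $\Delta t_{min} \geq 0$, in particular $\Delta t_{min} = 0$, trivially satisfies~(\ref{eq:epsilon_prop}); I would dispatch this as a separate trivial case.
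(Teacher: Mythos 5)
Your proposal is correct and follows essentially the same route as the paper's proof: both invoke the inclusion $\mathcal{S}(x^*,\xi_0) \subseteq \Omega(x^*,\xi_0)$ to apply the exponential stability bound~(\ref{eq:exp_stab}) and then solve $Me^{-\alpha \Delta t}\|x_0 - x^*(\cdot)\| < \epsilon$ for $\Delta t_{min}$. Your treatment is in fact slightly more careful than the paper's, which omits the degenerate case $\|x_0 - x^*(\cdot)\| = 0$ and the clamping needed to guarantee $\Delta t_{min} \geq 0$.
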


\begin{proof}
Consider the control law for the primitive $k(x,\xi_0,t)$. 
This control law is assumed to render $x^*(x_0,\xi_0,t)$
exponentially stable over $\Omega(x^*(\cdot),\xi_0)$.
Take constants $M,\alpha > 0 \in \mathbb{R}$ satisfying Inequality~\ref{eq:exp_stab} and
$\epsilon >0, \epsilon \in \mathbb{R}$. Consider:
\begin{align*}
Me^{-\alpha(t-t_0)}&||x_0 - x^*(\cdot)|| < \epsilon \\
\Rightarrow t-t_0 &> \underbrace{-\frac{1}{\alpha}\log\left(\frac{\epsilon}{M||x_0 -
x^*(\cdot)||}\right)}_{\Delta t_{min}}
\end{align*}
As $x_0 \in \mathcal{S}(x^*,\xi_0) \subseteq \Omega(x^*,\xi_0)$,
Inequality~\ref{eq:exp_stab}~$\Rightarrow$~Inequality~\ref{eq:epsilon_prop} and we have the
existence of $\Delta t_{min}$ as desired.
\end{proof}
 Choosing $\epsilon$ so that the deviation is negligible (in practice, within
the accuracy of sensing), allows us to build an abstraction
of the motion primitive dynamics that we call the \textit{motion primitive transfer
function}.

\begin{figure}
  \centering
  \includegraphics[width=0.8\columnwidth]{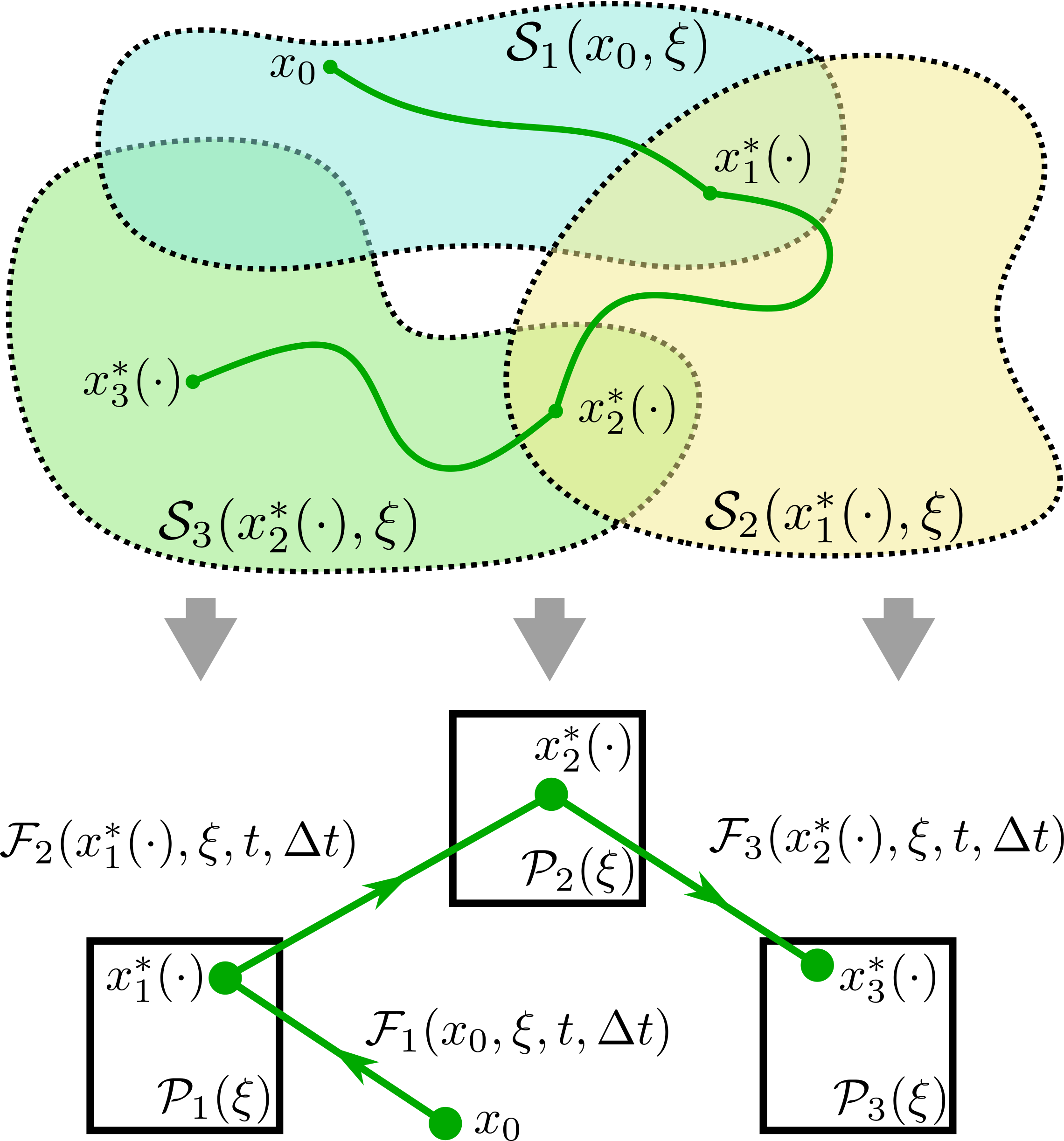}
  \caption{Dynamics of and relationships between motion primitives are
abstracted via the \textit{motion primitive transfer function}, implicitly producing a
mixed discrete and continuous graph.
}
  \label{fig:map_to_graph}
\end{figure}

\begin{definition}{}
\label{def:motion_primitive_transfer}
A \textbfit{motion primitive transfer function} is a map $\mathcal{F}:
\mathcal{X} \times \Xi \times \mathbb{R} \times \mathbb{R} \to \mathcal{X}$ that
abstracts the dynamics of a motion primitive from input system state, arguments,
initial time, and duration to an output system state:
\begin{align}
\mathcal{F}(x_0,\xi,t_0,\Delta t) = 
\begin{cases} 
      x^*(x_0,\xi,t_0+\Delta t) & \text{if } x_0 \in
\mathcal{S}(x^*), \\ &\Delta t \geq \Delta t_{min}\\
      x_0 & \text{otherwise}
\end{cases}
\raisetag{0.4cm}
\end{align}
\end{definition}

If the motion primitive is safe to use and our abstraction is valid, then this
function returns the setpoint of the primitive. If it is unsafe or the duration
is too short to ignore transient behavior, then the motion primitive cannot be
applied, and the map simply returns the state unchanged.

\begin{remark}
The motion primitive transfer function is composable in $x$, and a transition to
a specific motion primitive from a arbitrary state can be found by chaining
together motion primitive transfer functions, e.g. a sequence of transfer
functions such that:
\begin{align*}
x^*_1(x_0,\xi_1,t_1) &= \mathcal{F}_1(x_0, \xi_1,t_1,\Delta t_1)\\
x^*_2(x^*_1,\xi_2,t_2) &= \mathcal{F}_2(x^*_1, \xi_2,t_2,\Delta t_2)\\[-5pt]
&\;\;\vdots \\[-5pt]
x^*_n(x^*_{n-1},\xi_n,t_n) &= \mathcal{F}_n(x^*_{n-1}, \xi_n,t_n,\Delta t_n)
\end{align*}
\end{remark}

This construction builds a natural \textit{motion primitive graph} structure in
the state space of our system and the transition from an arbitrary state to a
specific motion primitive is reduced to finding an appropriate path of motion
primitive transfers. Namely, $\mathcal{R} = \{\mathcal{F}_i(\cdot,\xi_i, t_i, \Delta t_i)$,
$i=0,\ldots,n\}$ where $n$ corresponds to the desired motion primitive.

\section{Searching Mixed Discrete and Continuous Graph}
\label{sec:contribution}
In order to achieve robustness via motion primitive transitions, valid
motion primitive transition paths used to react to disturbances and
environmental uncertainties must be found in real time. Despite our abstraction
of the motion primitive dynamics, searching quickly still poses a challenge.
Typical methods, such as discrete search and pruning-based methods, scale
unfavorably with the number motion primitives, size of the argument sets, and
the dimensionality of the state space. Additionally, there is no expectation of
convexity in the motion primitive transfer functions, posing difficulty for
optimization-based methods.

There has been demonstrable success solving this class of search by using
randomized search algorithms, including Rapidly-exploring Random Trees (RRT)
\cite{Lavalle98rapidly-exploringrandom}, and variants.
This class of algorithm can effectively search
high-dimensional, nonconvex space, and is a natural choice for searching our
motion primitive graph. We leverage this with an RRT-based search to discover
feasible paths in the motion primitive graph, followed by constrained
gradient descent and node-pruning post-processing.

\subsection{RRT-based Feasible Path Search Algorithm}
\begin{algorithm}[t]
\caption{FeasiblePathSearch}
\begin{small}
\begin{algorithmic}[1]
\State \textsc{\textbf{Node :=}} \{state, action, parent, cost to come, est. cost to go\}

\Function{FeasiblePathSearch ($\motionprimitive_d(\xi_d), x_0$)}{}
    \State $\mathcal{R} = \{\}$
    \State $n_d = \textsc{Node}(\emptyset)$
    \State $J_{\text{g}} = \textsc{Cost}(x^*_d(\cdot), x_0$)
    \State $n_0 = \textsc{Node}(x_0,\emptyset,\emptyset,0,J_\text{g})$
    \State $\mathcal{N} = \{n_0\}$
    \State $(n_c, c) = (n_0, J_\text{g})$
  \While{\textsc{Node}$(x^*_d(\cdot), \mathcal{F}_d(\cdot, \xi_d, \cdot), \cdot) \not\in \mathcal{N}$}
    \State $n_s = \textsc{Node}(\emptyset)$
    \State $u \thicksim U([0,1])$
    \If {$u < p$} \Comment{Sample the cheapest node}
      \State $n_s = n_c$
    \Else \Comment{Draw uniformly from existing nodes}
      \State $n_s \thicksim U(\mathcal{N})$
    \EndIf
    \State $\mathcal{F}_s \thicksim U(\mathcal{F})$ \Comment{Draw uniformly from transfer functions}
    \State $\xi_s \thicksim U(\Xi_s)$
    \State $t_s \thicksim U([t_{s_{min}}, t_{s_{max}}])$
    \State $\Delta t_s \thicksim U([\Delta t_{s_{min}}, \Delta t_{s_{min}} + t_{s_{max}}])$
    \State $x^*_s = \mathcal{F}_s(n_s.x, \xi_s,t_s,\Delta t_s)$
    \If{$n_s.x \neq x^*_s$} \Comment{Add new node if $\mathcal{F}_s$ progresses}
      \State $J_{\text{g}} = \textsc{Cost}(x^*_d(\cdot), x^*_s$)
      \State $J_{\text{c}} = \textsc{Cost}(x^*_s, n_s.state) +
n_s.{cost\_to\_come}$
       \State $\mathcal{N}.\text{push}(\textsc{Node}(x^*_s, \mathcal{F}_s,n_s,J_{\text{c}},J_{\text{g}}))$
       \If {$J_{\text{g}} + J_{\text{c}} < c}$ \Comment{Update cheapest node}
          \State $c = J_{\text{g}} + J_{\text{c}}$
          \State $n_c = n_s$
       \EndIf
       \If {$x^*_d(\cdot) = \mathcal{F}_d(x^*_s, \xi_d, \cdot)$} \Comment{Check for goal}
          \State $J_{\text{c}} = \textsc{Cost}(x^*_d, n_s.state) + n_s.{cost\_to\_come}$
          \State $n_d = \textsc{Node}(x^*_d, \mathcal{F}_d,n_s,J_{\text{c}},0))$ 
          \State $\mathcal{R} = \{n_d\}$
          \State \textbf{break;}
       \EndIf
    \EndIf
  \EndWhile
  \State $n = n_d$
  \While{$n.\text{parent} \neq \emptyset$} \Comment{Build final path}
     \State $\mathcal{R}.\text{push\_front}(n)$
     \State $n = n.\text{parent}$
  \EndWhile
  \State \Return $\mathcal{R}$
\EndFunction
\end{algorithmic}
\end{small}
\label{alg:rrt_search}
\end{algorithm}

The first planning step, an RRT-inspired search, randomly expands a tree
structure of nodes from an initial state ($x_0$), exploring the graph space
until a feasible path to the desired primitive ($\motionprimitive_p(\xi)$) is
reached.

At each iteration, a node from the set of explored nodes is sampled. With
probability $p$, this node is selected to be the cheapest node to encourage
exploration towards the goal, otherwise the sampled node is taken uniformly.
Next, a random motion primitive transfer function, corresponding arguments, and times
are selected uniformly from their respective domains. If the sampled
motion primitive transfer function makes progress, a new node is added 
as a child of the sampled node. As each new node is added, the cheapest node is
updated, and one-step reachability of the goal is checked
via $\mathcal{F}_d$, the transfer function associated with the desired
motion primitive, $\motionprimitive_d(\xi_d)$.

Once the goal can be reached, the main iteration loop terminates, and the
feasible path is returned. This is elucidated in Algorithm
\ref{alg:rrt_search}.
\subsection{Feasible Path Post-Processing}
The feasible trajectory is then post-processed via two components: a constrained
gradient descent and node-pruning. The constrained gradient descent
intends to select the locally optimal $\xi, t_0$, and $\Delta t$ to minimize
cost between nodes. Consider node $n_i$ with parent $n_{i-1}$ and child
$n_{i+1}$. For differentiable cost function $J:\mathcal{X} \times \mathcal{X} \rightarrow \mathbb{R}$, we have
local cost of $J_i \in \mathbb{R}$ as:
\begin{align}
J_i = J(x^*_{i-1},x^*_i) + J(x^*_{i},x^*_{i+1})
\end{align}
and cost gradient as:
\begin{align*}
\nabla J_i &=\frac{\partial J_i}{\partial(\xi_i,t_i,\Delta t_i)}
           =\frac{\partial J_i}{\partial x^*_i}\frac{\partial
x^*_i}{\partial(\xi_i,t_i,\Delta t_i)} \\
           &=\left(\frac{\partial J(x^*_{i-1},x^*_i)}{\partial x^*_i}
          +\frac{\partial J(x^*_i,x^*_{i+1})}{\partial x^*_i}\right)\frac{\partial
x^*_i}{\partial (\xi_i,t_i,\Delta t_i)} 
\end{align*}

As $J$ is differentiable and $x^*$ is differentiable in $\xi,t,\Delta t$, this gradient
exists and is well-defined. Constraining $x^*_i \in \mathcal{S}_{i+i}$, we can
apply constrained gradient descent \cite{boyd2003subgradient} to find the locally optimal choice
for $\xi_i,t_i$, and $\Delta t_i$ for each node.

In node-pruning, unnecessary nodes in the feasible path are removed. That is, for candidate unnecessary node $i$ in path $\mathcal{R}$, if:
\begin{align*}
x^*_1(x_0,\xi_1,t_1) &= \mathcal{F}_1(x_0, \xi_1,t_1,\Delta t_1)\\[-5pt]
&\;\;\vdots \\[-5pt]
x^*_{i+1}(x^*_{i-1},\xi_{i+1},t_{i+1}) &= \mathcal{F}_{i+1}(x^*_{i-1}, \xi_{i+1},t_{i+1},\Delta t_{i+1})\\[-5pt]
&\;\;\vdots \\[-5pt]
x^*_n(x^*_{n-1},\xi_n,t_n) &= \mathcal{F}_n(x^*_{n-1}, \xi_n,t_n,\Delta t_n)
\end{align*}
is still a feasible path, then node $n_i$ can be bypassed and should be removed
from $\mathcal{R}$. Each node in $\mathcal{R}$ is checked sequentially for
this condition and removed as necessary.

Note that the constrained gradient descent and path pruning post-processing
steps are coupled, and thus are intermingled iteratively to complete the
post-processing of the feasible path. The entirety of this process is depicted
in Figure~\ref{fig:graph_search}. In practice, many paths can be computed in parallel, and the lowest cost among the paths taken as the result.
\begin{figure}
  \centering
  \begin{subfigure}[t]{0.7\columnwidth}
      \centering
      \includegraphics[width=0.68\columnwidth]{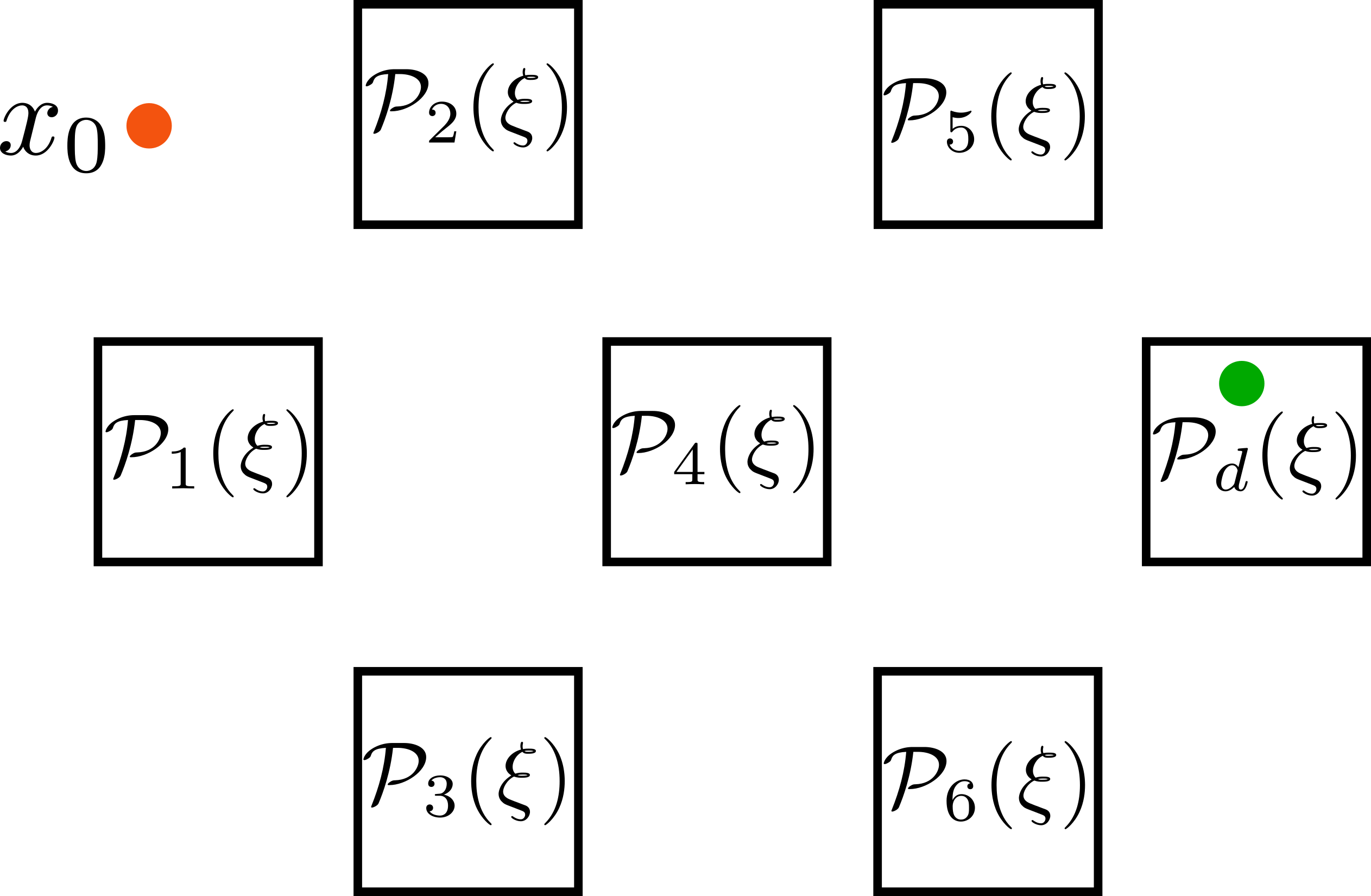}
      \captionsetup{format=hang}
      \caption{Motion Primitive Graph with initial state $x_0$ and goal primitive
$\mathcal{P}_d(\xi)$}
  \end{subfigure}\\
  \vspace{6mm}
  \begin{subfigure}[t]{0.475\columnwidth}
      \raggedright
      \includegraphics[width=\columnwidth]{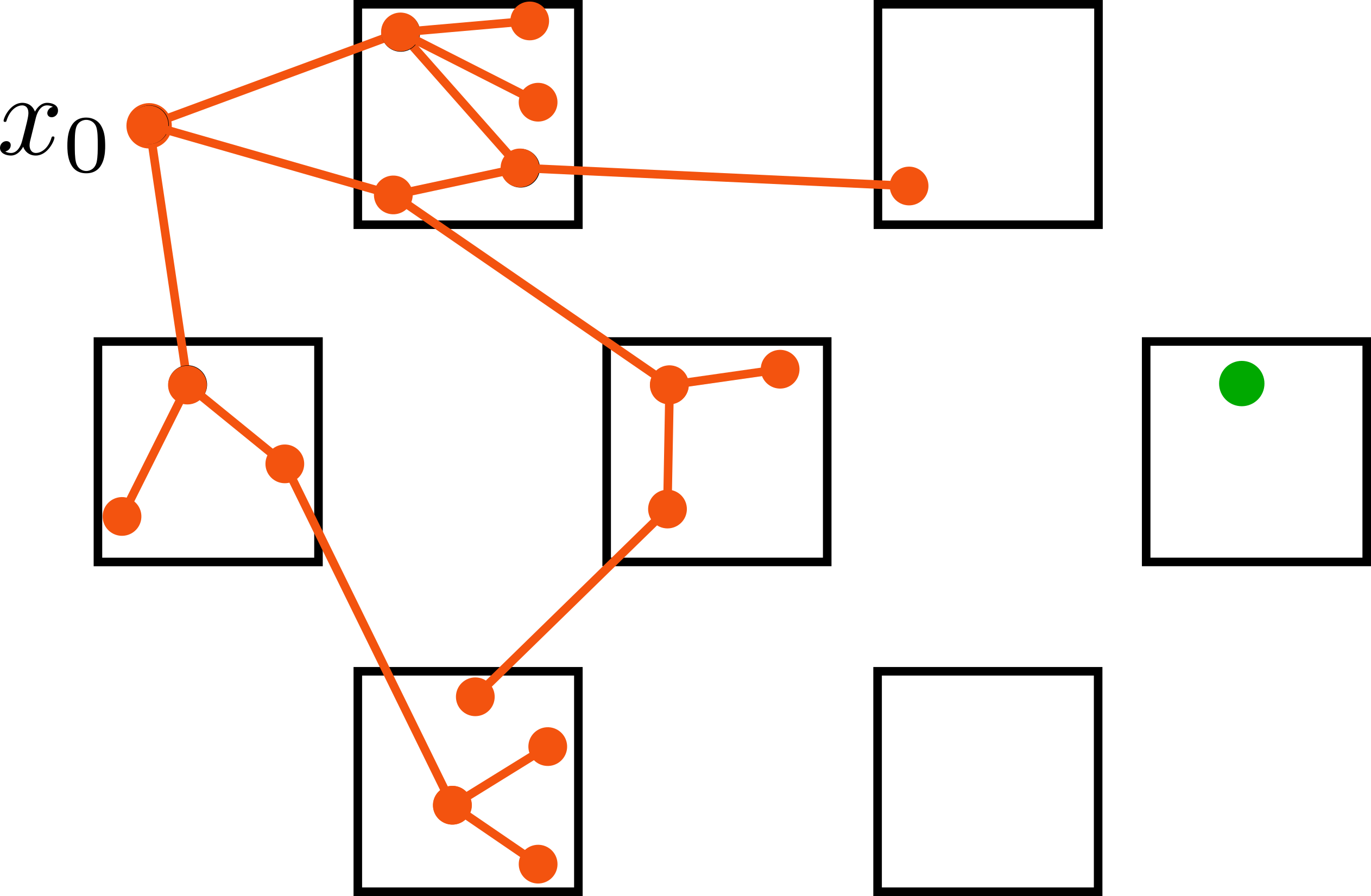}
      \centering
      \caption{Feasible Path Search}
  \end{subfigure}%
  \begin{subfigure}[t]{0.475\columnwidth}
      \raggedleft
      \includegraphics[width=\columnwidth]{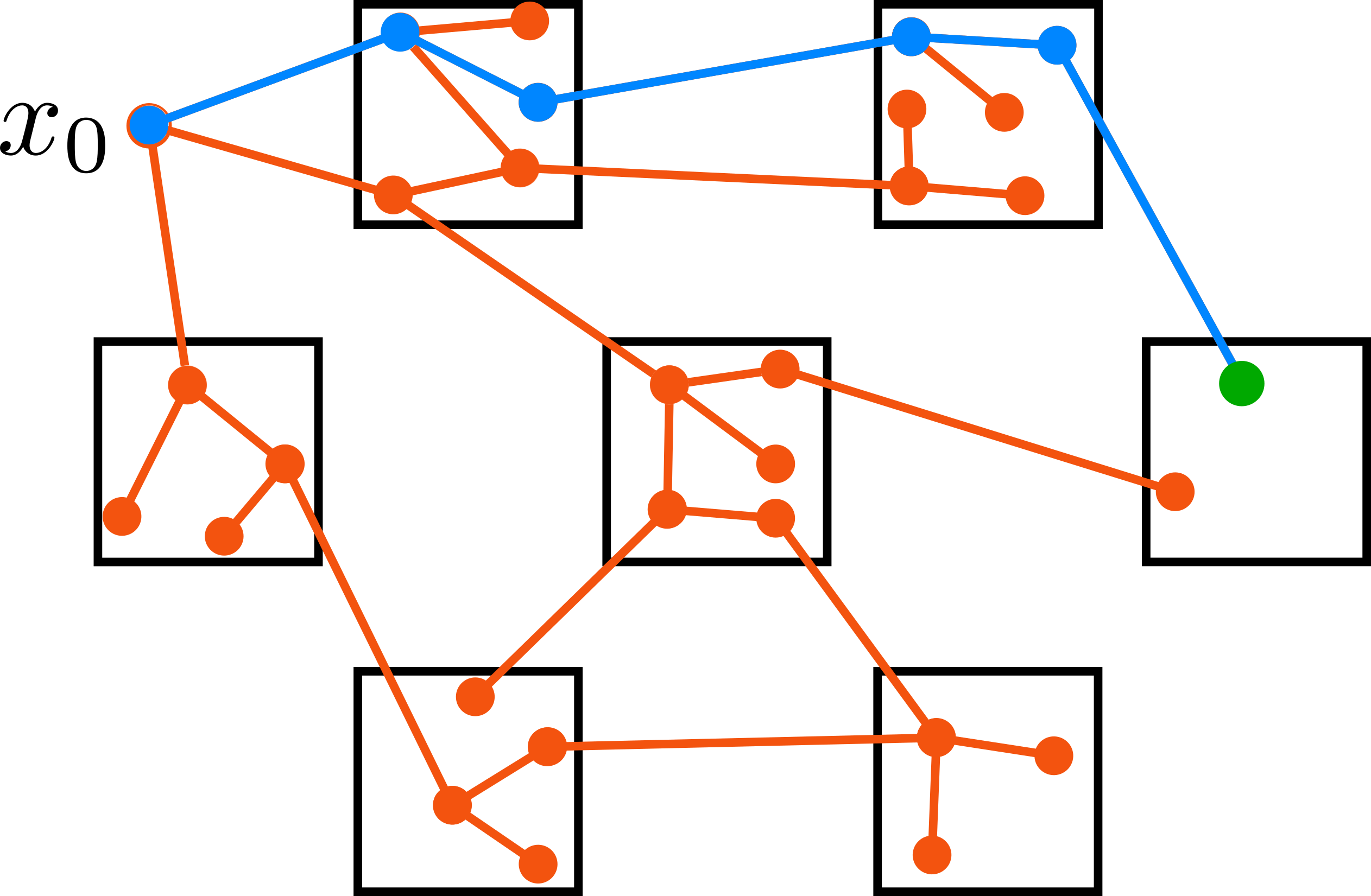}
      \centering
      \caption{Feasible Path Found}
  \end{subfigure}\\
  \vspace{6mm}
  \begin{subfigure}[t]{0.475\columnwidth}
      \raggedright
      \includegraphics[width=\columnwidth]{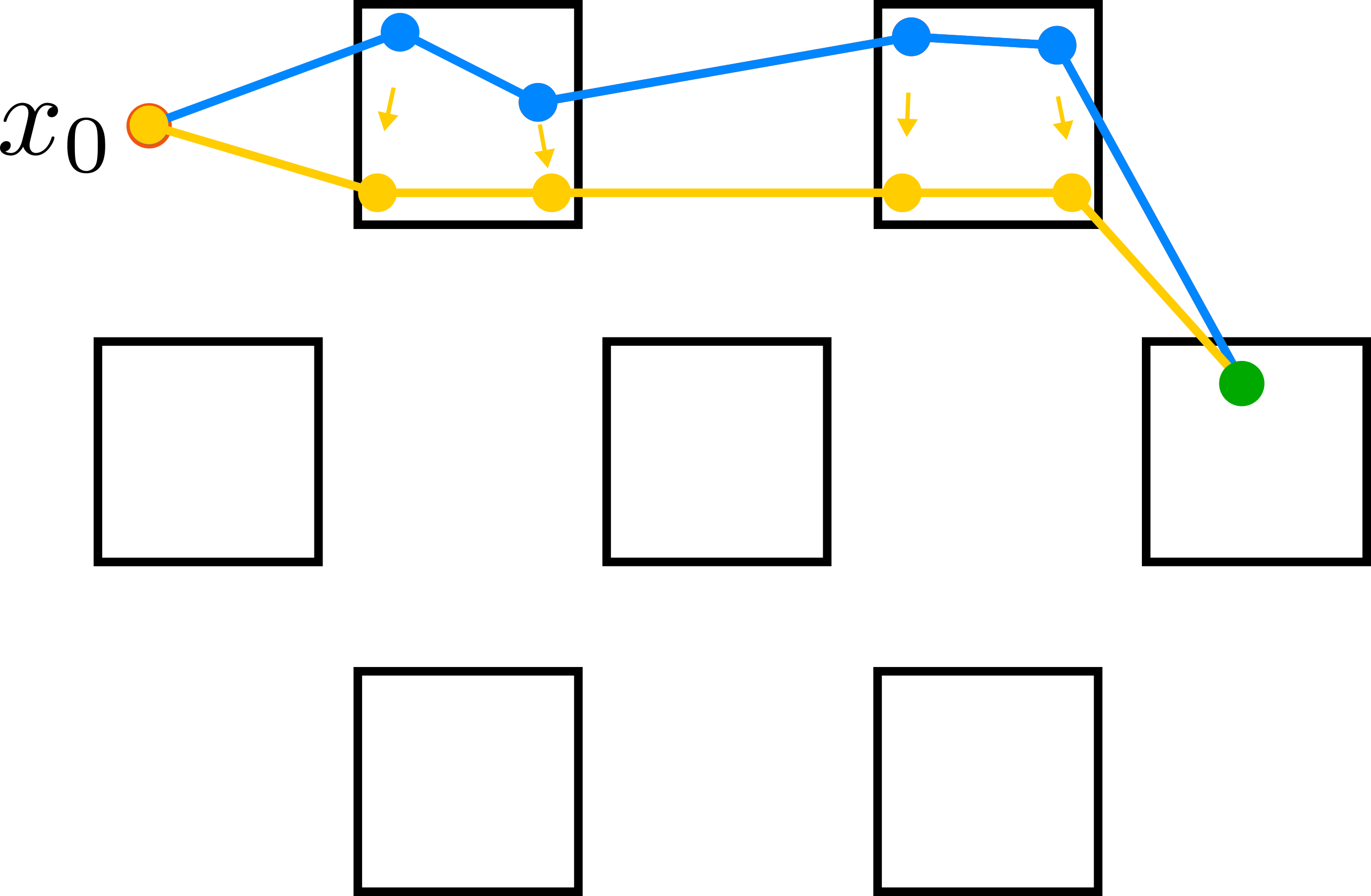}
      \captionsetup{format=hang,width=0.8\columnwidth}
      \caption{Constrained Gradient Descent in $\xi$}
  \end{subfigure}%
  \begin{subfigure}[t]{0.475\columnwidth}
      \raggedleft
      \includegraphics[width=\columnwidth]{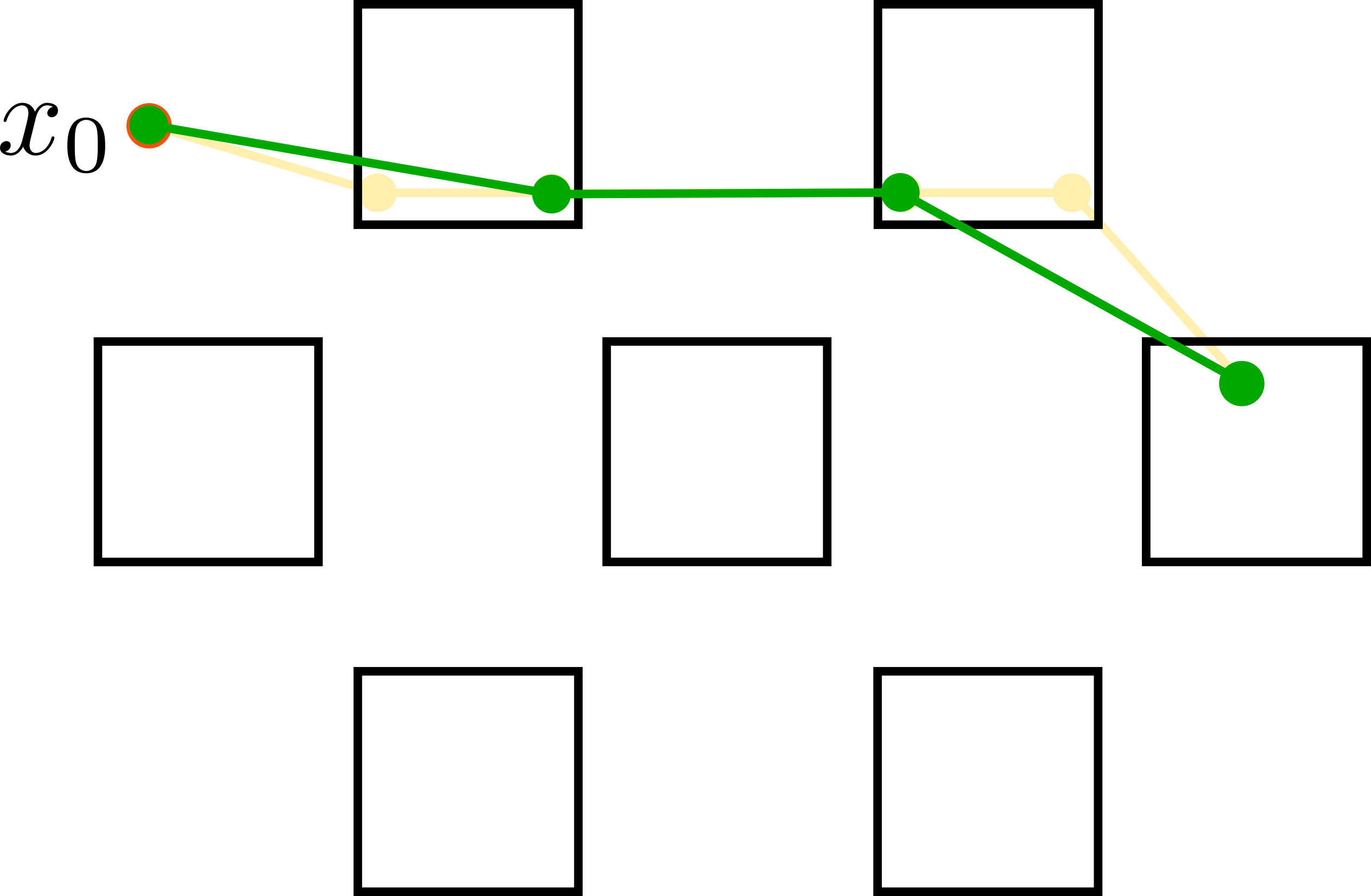}
      \captionsetup{format=hang,width=0.95\columnwidth}
      \caption{Prune unnecessary nodes, \\ Final path found}
  \end{subfigure}
  \caption{Depiction of the search algorithm for the mixed discrete and
continuous motion primitive transition graph. Boxes represent
motion primitives and their continuous domain of arguments. 
Step (d) and (e) are iterated together.}
  \label{fig:graph_search}
\end{figure}

\section{Application to Quadrupeds}
\label{sec:quadruped}

To investigate this work in a real-world application, the presented concepts are
applied to the Unitree A1 quadrupedal robot with a experimental set of motion
primitives. Here, we have configuration space $q \in \mathcal{Q} \subset
\mathbb{R}^n$ with state space $x = (q,\dot{q}) \in \mathcal{X} =T\mathcal{Q}\subset
\mathbb{R}^{2n}$ with $n=18$. We have $m=12$ actuated degrees of freedom for
control input $u\in \mathcal{U} \subset \mathbb{R}^m$. The
\textit{hybrid-dynamic} nature
of the system leads to several domains of operation to be considered. These
domains are marked by the contact state of each foot, denoted by a contact
vector $c \in \{0,1\}^{|\mathcal{N}_c|}$ where $\mathcal{N}_c = \{1,2,3,4\}$ the set
of considered contacts, in
this case the quadruped's feet. We consider and model no-slip via \textit{holonomic
constraints} $\psi(q) \equiv 0, \psi(q) \in \mathbb{R}^{h}$, where $h$ depends on the
number of active contacts, i.e. the hybrid domain. 
We have our system dynamics for a specific domain in control affine form as:
\begin{align*}
    \dot{x} = \underbrace{\begin{bmatrix}\dot{q}\\-D(q)^{-1}(H(q,\dot q)-J(q)^\top \lambda)\end{bmatrix}}_{f(x)} + \underbrace{\begin{bmatrix}0\\D(q)^{-1}B\end{bmatrix}}_{g(x)}u,
\end{align*}
%
%
where $D(q)\in\mathbb{R}^{n\times n}$ is the mass-inertia matrix, $H(q,\dot
q)\in\mathbb{R}^n$ accounts for the Coriolis and gravity terms,
$B\in\mathbb{R}^{n\times m}$ is the actuation matrix, $J(q) = \frac{\partial
c(q)}{\partial q} \in \mathbb{R}^{h}$ is the Jacobian of the holonomic
constraints, and $\lambda\in \mathbb{R}^{h}$ is the constraint wrench.
${f:\mathcal{X}\to \mathcal{X}}$ and ${g:\mathcal{X}\to\mathbb{R}^{{2n}\times m}}$ are assumed to be locally Lipschitz continuous. 

\subsection{Quadruped Motion Primitives Preliminaries}
Our experiments include several experimental motion primitives that utilize
various control techniques to achieve their desired behavior. Though our
method is agnostic to these implementation details and only requires that
Definition~\ref{def:primitive} be satisfied, a brief discussion
provides valuable context for realizing our method on a real system. We will
begin by addressing some commonality between our test motion primitives.

\begin{primitive}{Position and Velocity Safe Sets}
For all primitives, we can define the safe set for joint position and
velocity limits as:
\begin{align*}
\mathcal{C}_{q,\dot{q}} = \{x \in \mathcal{X} : \;&  q_{\rm min} \leq q \leq q_{\rm max},
                                                  &  \dot{q}_{\rm min} \leq \dot{q} \leq \dot{q}_{\rm max} \}.
\end{align*}
\end{primitive}
\begin{primitive}{Computing Safe Regions of Attraction}
We consider an estimate $\mathcal{E}
\subseteq \mathcal{S}$. There are several methods available to build
$\mathcal{E}$, include Lyapunov-based methods \cite{davison1971computational,johansen2000computation, polanski1997lyapunov}, and backwards reachability
analysis \cite{Yuan2019}. 
We employ a conservative construction from Lyapunov analysis
of the linearization of the controller \cite{khalil2002nonlinear} and expand this region via
hardware testing until the states of nominal operation are encompassed to produce
conservative, but useful, formulations for $\mathcal{E}$.
\end{primitive}

\subsection{Quadruped Motion Primitives in Experiments}
\label{subsec:quadruped_primitives}

\begin{primitive}{Lie}
\textit{Lie} is a motion primitive that rests the quadruped on the ground with the
legs in a prescribed position. The feedback controller is a joint-space PD controller
where $x^*(x_0,t)$ is a cubic spline motion profile from the
initial pose to the goal pose, $x^*_{\rm Lie}$. There are no
continuous arguments, $\Xi = \emptyset$. In addition to the common safe set, 
the safe set for Lie requires at least one foot in
contact with the ground, i.e. 
\begin{align*}
\mathcal{C}_{\rm Lie} = \mathcal{C}_{q,\dot{q}} \cap \{x \in \mathcal{X} \;|\; \exists
n_c \in \mathcal{N}_c \text{ where } c\{n_c\} = 1\}
\end{align*}
\end{primitive}

\begin{primitive}{Stand}
The \textit{Stand} motion primitive has setpoint $x^*_{\rm Stand}(\xi,t)$ 
to drive the body to specified height and orientation and 
center of mass to be above the centroid of the support polygon. Its trajectory is determined by a cubic spline in center of mass task-space. $\Xi =
\{h,\theta_x,\theta_y,\theta_z\}$ with domain between bounds
$\xi_{\rm min}, \xi_{\rm max}$ derived from kinematic limits.

The control law is an Inverse-Dynamics Quadratic Program (ID-QP) including no-slip constraints on the feet.
$\ddot{q}(x,t)$ in the objective function is
specified by a task-space PD control law.
Since this controller assumes ground contact of all feet, we require it via the safe set:
\begin{align*}
\mathcal{C}_{\rm Stand} = \mathcal{C}_{q,\dot{q}} \cap \{x \in \mathcal{X} \;|\;
c\{n_c\} = 1 \quad \forall n_c \in \mathcal{N}_c \}
\end{align*}
\end{primitive}

\begin{primitive}{Walk}
The \textit{Walk} primitive is a diagonal-gait walking trot, with arguments $\Xi
= \{h,v_x,v_y,v_{\theta_z}\}$ and associated bounds corresponding to 
linear velocity in $x$ and $y$, angular velocity
about the $z$ axis, and body height.
As with \textit{Stand}, walk uses an ID-QP based controller to track $x_{\rm walk}^*(\xi,t)$ in center of
mass space, but in this case there is an additional component for a Raibert-style swing leg
trajectory \cite{raibert1986legged}.

This controller assumes contact of the diagonal stance legs, so we have safe set as:
\begin{align*}
\mathcal{C}_{\rm Walk}(t)= \mathcal{C}_{q,\dot{q}} \cap \{x \in \mathcal{X}
\;|\; &c\{n_{c}\} = 1 \quad \forall n_c \in \mathcal{N}_{\rm stance}(t) \}
\end{align*}
where $\mathcal{N}_{\rm stance}(t) \subset \mathcal{N}_c$ are the stance contacts
at time $t$.

\end{primitive}

\begin{primitive}{Land}
The \textit{Land} primitive is a high-damping task-space PD control law on the
position of the feet while the quadruped is airborne. The goal position of the
feet is specified to maintain a constant support polygon, but the relative
position of the center of the support polygon with respect to the center of mass
is modulated according to a spring-loaded inverted pendulum model (SLIP) to
remove all body velocity during the contact phase.
The setpoint $x^*(x_0,t)$ is derived from a ballistic trajectory
given by the initial position and velocity and kinematics for the desired foot
pose. The safe set for Land is simply the joint position and velocity safe set,
$\mathcal{C}_{\rm Land} = \mathcal{C}_{q,\dot{q}}$.
\end{primitive}
\subsection{Implementation and Experimental Results}
Implementations for each experimental motion primitive and the algorithms
described in Section~\ref{sec:contribution} were built in our C++ motion
primitive control framework. Here, the main control loop process runs 1kHz,
and the motion primitive graph search runs in a separate thread asynchronously,
with typical computation time less than 50 ms.
The computation is done on a onboard Intel NUC with an i7-10710U CPU and 16GB of
RAM.
\begin{figure*}[t]
  \centering
  \begin{subfigure}[t]{\textwidth}
    \includegraphics[width=0.75\columnwidth] {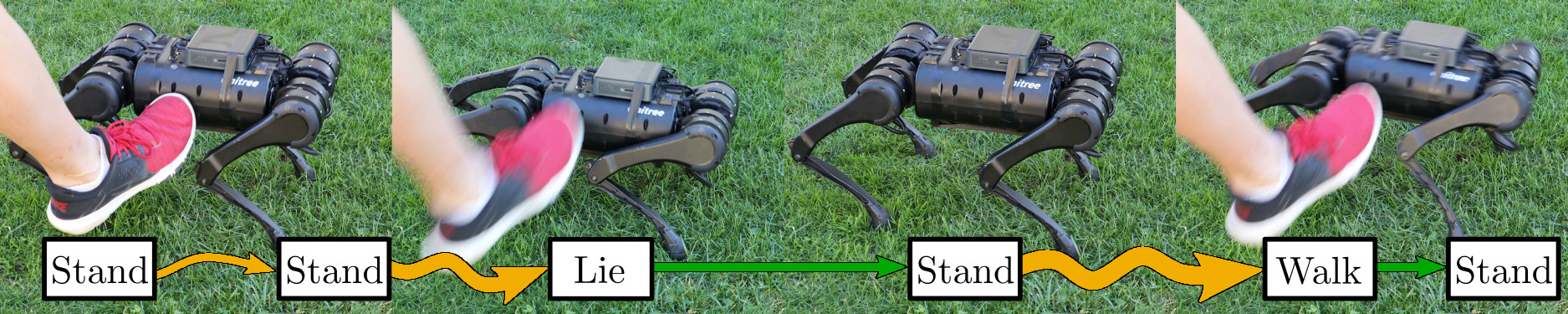}%
    \hspace{1mm}
    \includegraphics[width=0.22\columnwidth,trim=0 7mm 0 7mm]{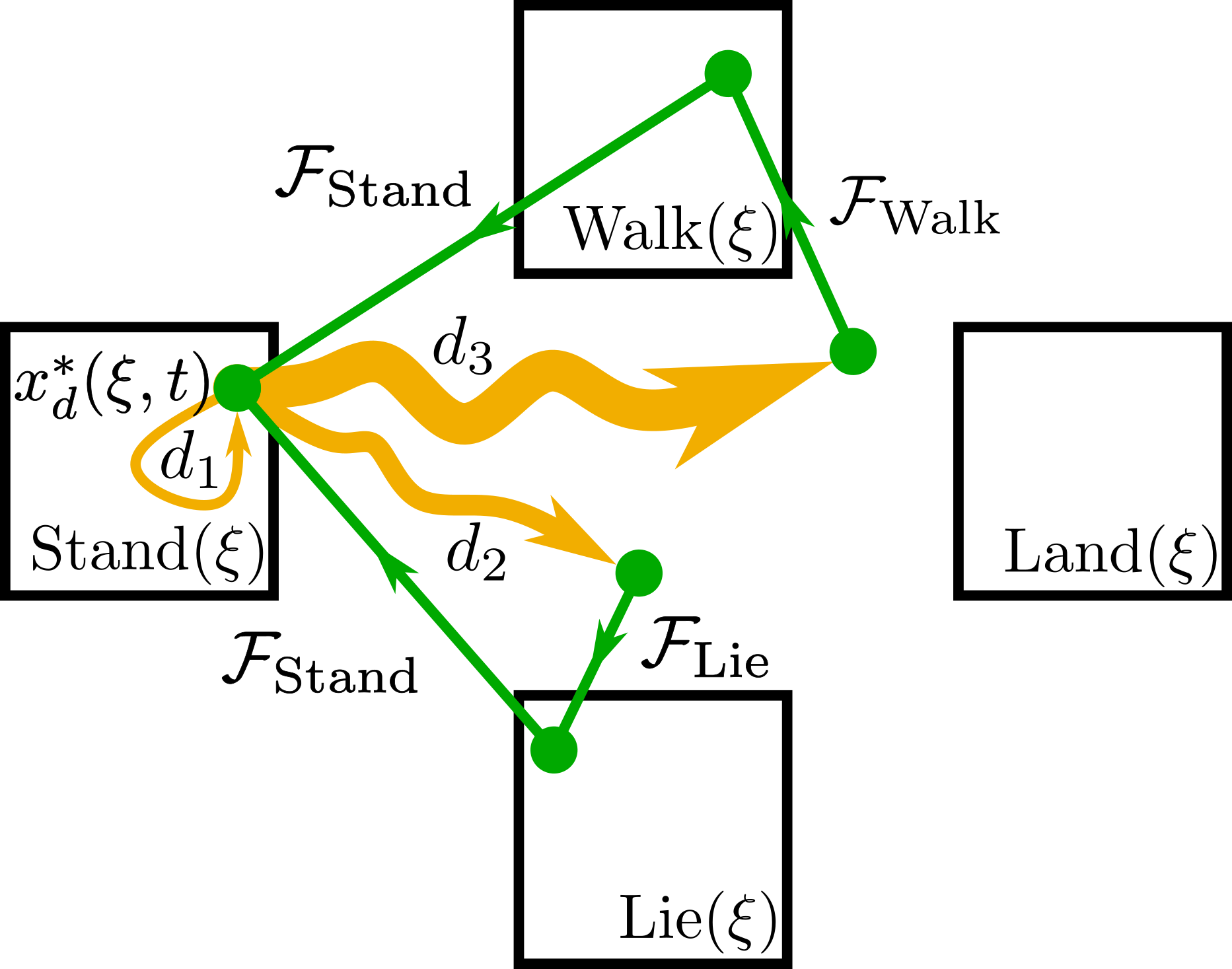}
    \caption{\textbf{Kick During Stand}: Differing magnitudes of disturbance elicit different responses.}
    \label{fig:stand_kick}
  \end{subfigure}
  \begin{subfigure}[t]{\textwidth}
    \includegraphics[width=0.75\columnwidth] {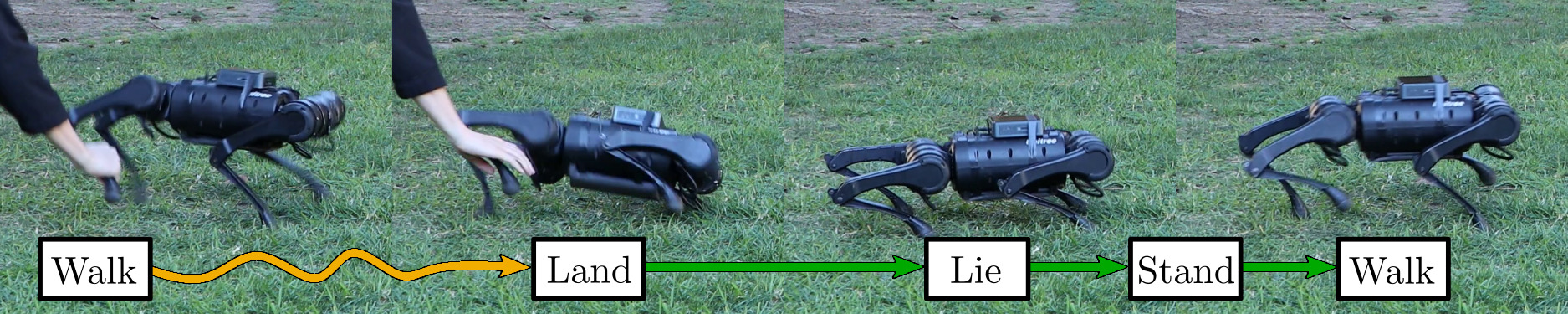}%
    \hspace{1mm}
    \includegraphics[width=0.22\columnwidth,trim=0 7mm 0 7mm]{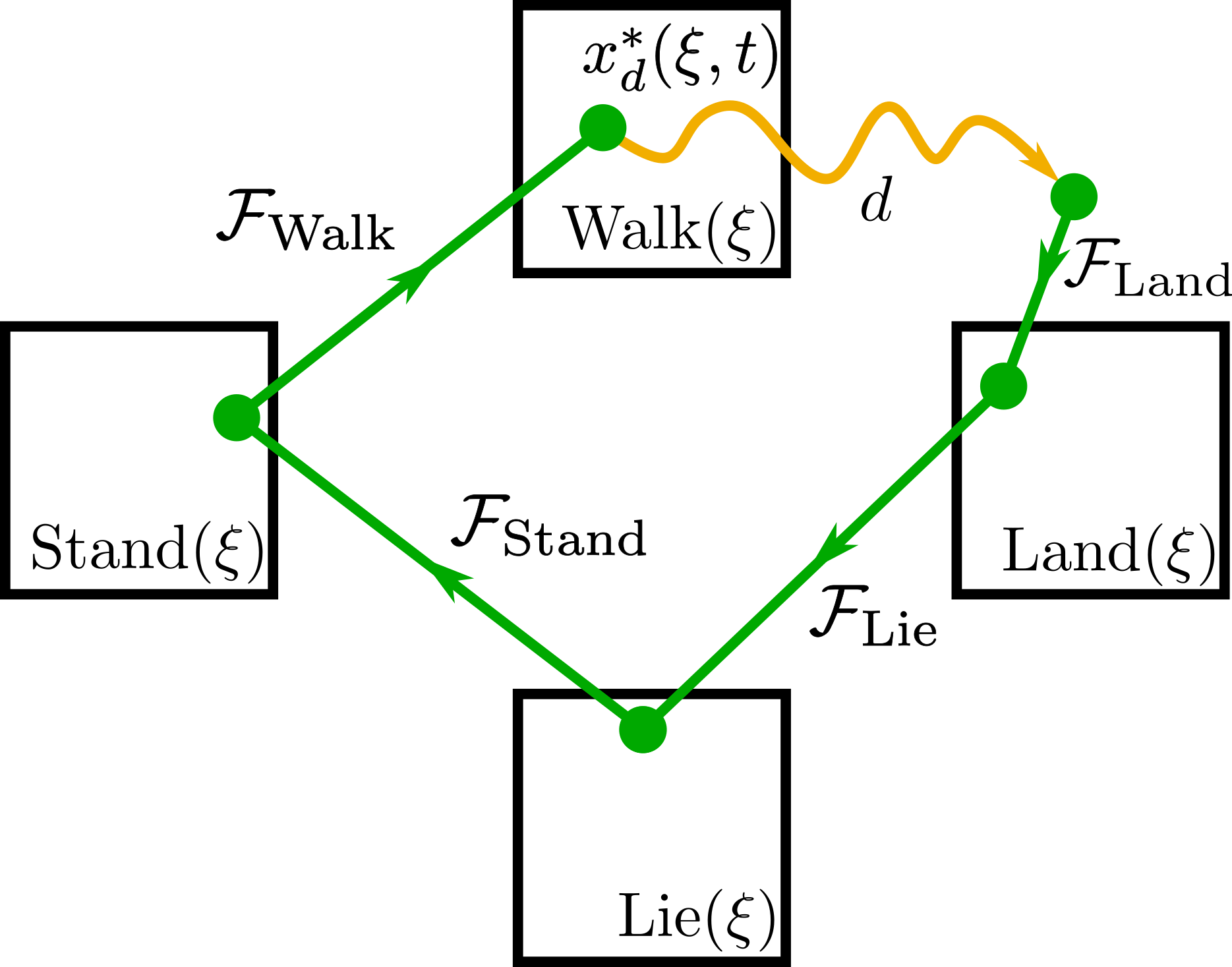}
    \caption{\textbf{Leg Pull while Walking}: Robustness to intentional disturbance while walking.}
    \label{fig:leg_grab}
  \end{subfigure}
  \begin{subfigure}[t]{\textwidth}
    \includegraphics[width=0.75\columnwidth] {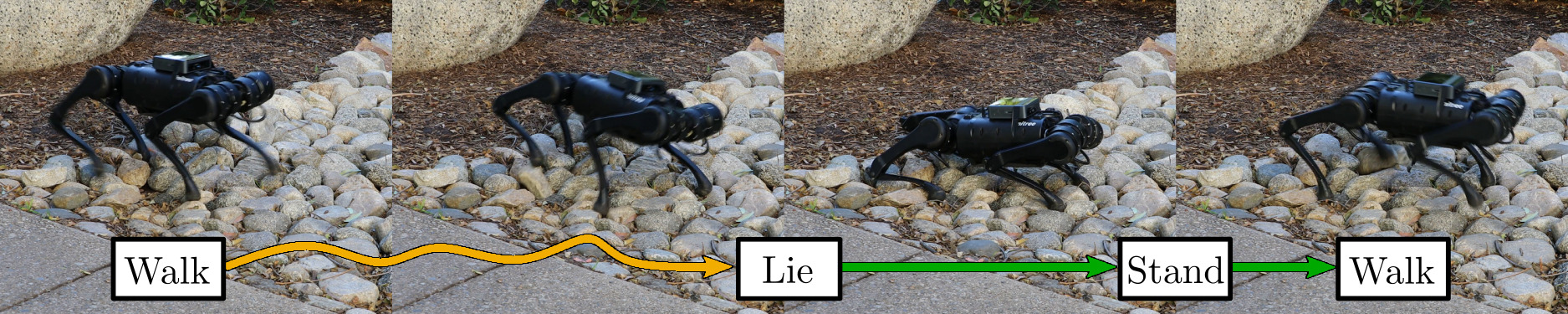}%
    \hspace{1mm}
    \includegraphics[width=0.22\columnwidth,trim=0 7mm 0 7mm]{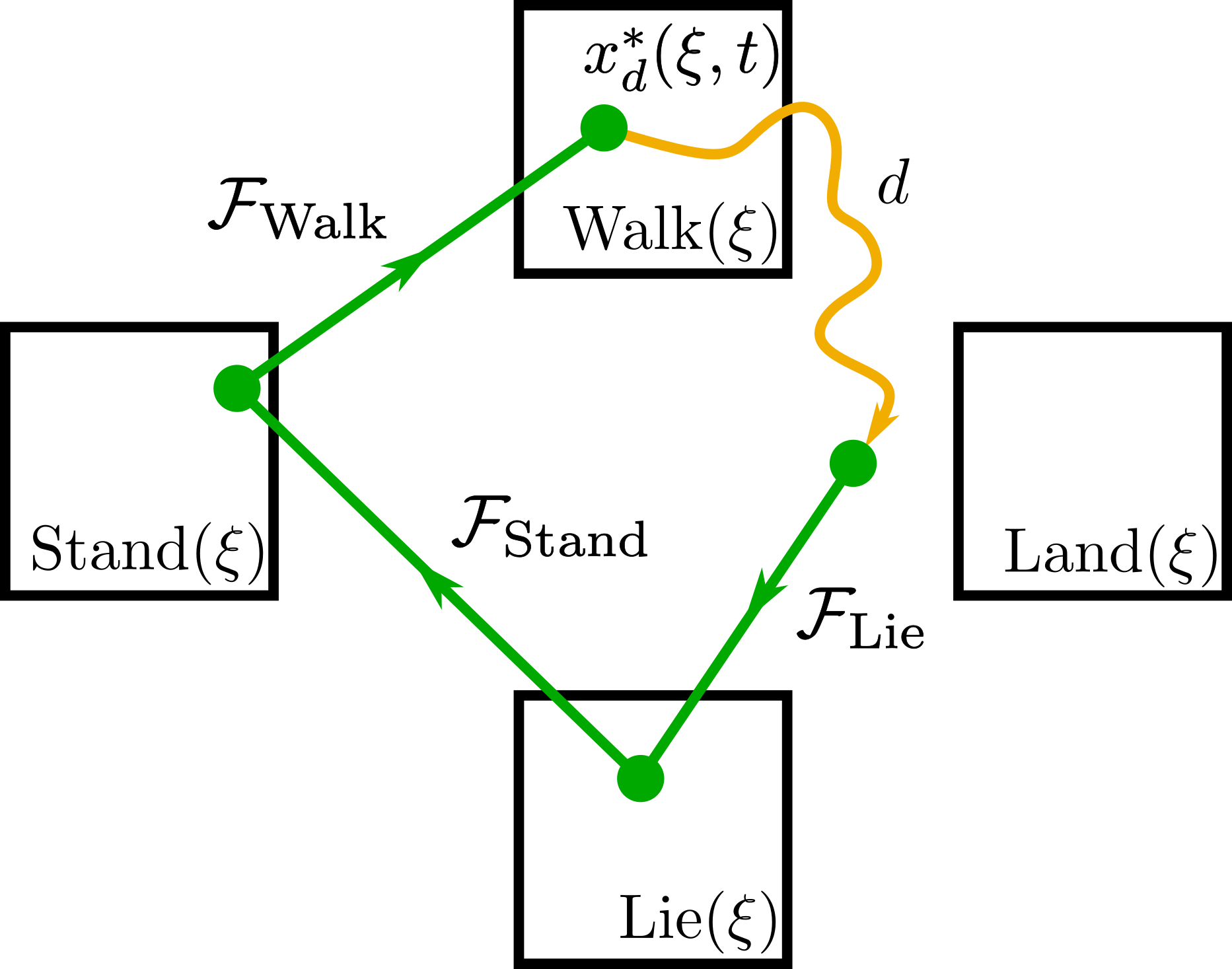}
    \caption{\textbf{Walking on Loose, Uneven Stones}: Robustness to challenging walking environment}
    \label{fig:stones}
  \end{subfigure}
  \begin{subfigure}[t]{\textwidth}
    \includegraphics[width=0.75\columnwidth] {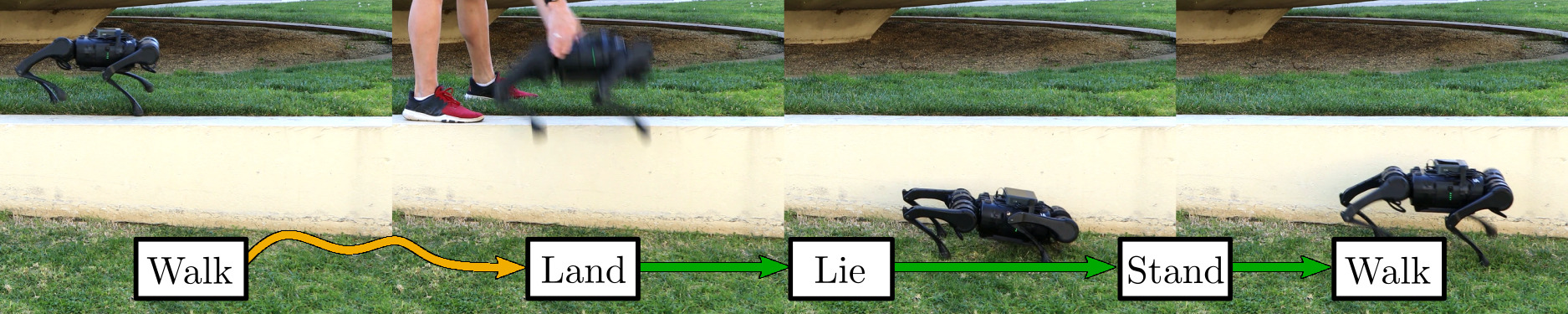}%
    \hspace{1mm}
    \includegraphics[width=0.22\columnwidth,trim=0 7mm 0 7mm]{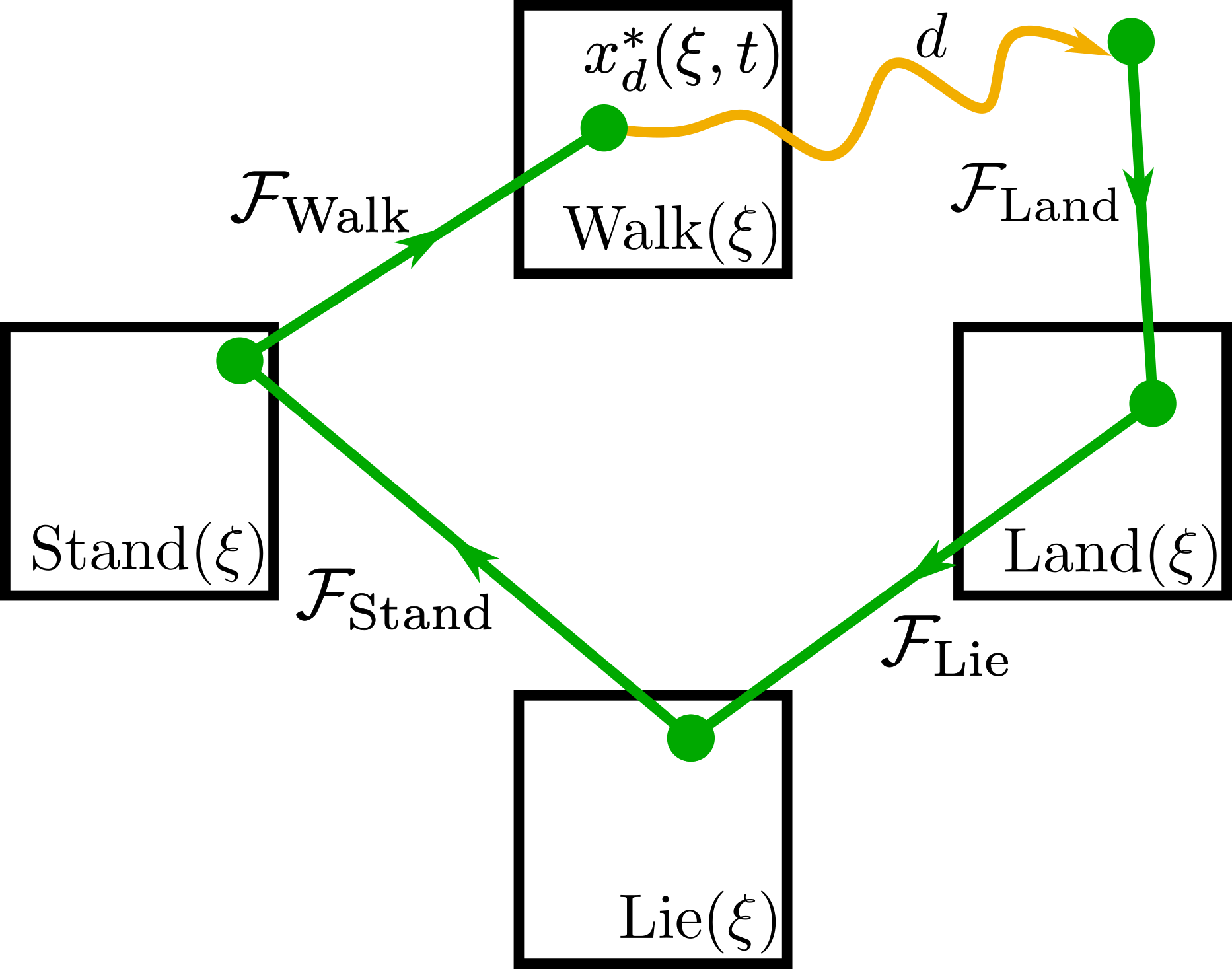}
    \caption{\textbf{Ledge Toss}: Combination of disturbance and large environmental uncertainty}
    \label{fig:ledge_fall}
  \end{subfigure}
  \caption{The experimental results of our proposed method exhibiting robustness across a variety of disturbances and conditions. Video of these results can be seen in the supplemental video \cite{video}.}
  \label{fig:exp_results}
  \vspace{-2mm}
\end{figure*}

The specifics and results of each experiment are discussed below with details 
in Figure~\ref{fig:exp_results} and supplementary video
\cite{video}. Motion primitive commands are truncated for brevity when arguments
are equal to zero, i.e. Stand($h=0.2$ m, $\theta_x = 0$
rad,  $\theta_y = 0$ rad, $\theta_z = 0$ rad) is shown as Stand($h=0.2$ m).

\begin{primitive}{Nominal Transition to Walk}
Though emphasis in this work is achieving robustness via motion primitive
transitions, it is implied that a nominal transition should be successful. In
this experiment, the initial pose of the robot is at rest on the ground, with
the motors unactuated. The commanded motion primitive is Walk$(h = 0.25\textrm{
m}, v_x = 0.2 \textrm{ m/s})$. From this position, the algorithm computes a sequence
from the initial state to Lie(), Stand($h=0.2$ m), and then the goal, Walk$(h =
0.25\textrm{ m}, v_x = 0.2\textrm{ m/s})$.
\end{primitive}
\begin{primitive}{Kick During Stand}
The command motion primitive is Stand($h = 0.25$ m), and subject to kick
disturbances of varying magnitude. The Stand primitive has some inherent
robustness, and when a small kick is applied, the state remains within the safe
region of attraction and the system is stable to the setpoint without any
transition. With a moderate kick, this is not the case, and the algorithm
transitions through Lie before returning to standing at the desired height. With
an even larger kick, a different plan is computed, transitioning to walking in
place with Walk($h=0.25$) before returning to Stand($h = 0.25$ m).
\end{primitive}
\begin{primitive}{Leg Pull while Walking}
In this test, the desired primitive is Walk($h=0.25$ m, $v_x = 0.2$ m/s).
During the walk, the operator grabs a rear leg of the quadruped, providing some
initial disturbance and preventing forward motion during the leg's swing phase.
In response, the sequence Land(), Stand($h=0.2$ m), Walk($h=0.25$ m, $v_x = 0.2$
m/s) is computed. However, the continued disturbance prevents Lie() from being
executed, and the system stays in the Land() primitive until the leg is
released. At this point the recomputed plan can be finished and the quadruped
resumes walking.
\end{primitive}
\begin{primitive}{Walking on Loose, Uneven Stones}
The robot is commanded Walk($h=0.25$ m, $v_x = 0.2$ m/s), but
with a challenging environment consisting of loose, uneven stones. The stones
cause the footfall height to vary across steps, and can be move when stepped on 
causing further deviation from the expected conditions. There is no
perception involved and the walking primitive assumes flat terrain. In this
test, the quadruped is able to progress slowly, taking steps and planning
through the disturbances as they are
encountered. Replans include transitioning through Lie() and Stand($\xi$)
back to Walk, a single Stand($\xi$) to Walk, and in some cases, Land(),
Lie(), Stand($\xi$), Walk($h=0.25$ m, $v_x = 0.2$ m/s).
\end{primitive}
\begin{primitive}{Ledge Toss}
In this experiment, the quadruped is tossed off an $\sim0.5$ m high ledge
while being commanded to Walk($h=0.25$ m, $v_x = 0.2$ m/s). As the feet leave
contact with the ground, the Land() primitive begins executing, and continues to
execute until the state allows the transition to continue through Lie, Stand,
and back to the desired Walk command.
\end{primitive}

\section{CONCLUSIONS}
\label{sec:concl}

Motivated by the desire to achieve robust autonomy on dynamic robots, this
paper has established a definition of \textit{motion primitives} and used these
attributes to construct an abstraction of the dynamics through the
\textit{motion primitive transfer function}. This formulation leads to a
mixed discrete and continuous graph structure and we
presented a probabilistic search algorithm with constrained gradient descent and
node pruning post-processing to search this space for transition paths. The
performance of this procedure allows for online replanning of paths through the
motion primitive graph and can be used to reach the goal primitive in both
nominal and disturbed scenarios. This was demonstrated on a quadrupedal robot
for several experimental motion primitives subject to a variety of environmental
and antagonistic disturbances.

While this represents a significant contribution to robust autonomy on dynamic
systems, there are a number of extensions the authors would like to pursue. 
While a probabilistic approach is widely used in high-dimensional search
problems and represents a natural starting point, there is additional structure
to the problem ignored in this approach. As such, we intend to investigate how
this structure may be incorporated into search to improve results. 
We would also like to extend this framework to the contexts with perception,
and consider obstacles and varying environments explicitly. Future work
intends address these avenues in the context of motion primitives and dynamic
autonomy.

\addtolength{\textheight}{-8cm}   






\bibliographystyle{IEEEtran}
\bibliography{reference}	

\end{document}